\theoremstyle{definition}
\newtheorem{definition}{Definition}[section]
\newtheorem{theorem}{Theorem}[section]
\newtheorem{proposition}[theorem]{Proposition}
\newtheorem{example}[theorem]{Example}
\newtheorem{remark}[theorem]{Remark}
\newcommand{\argmax}{\mathop{\rm arg~max}\limits}
\newcommand{\argmin}{\mathop{\rm arg~min}\limits}
\title{Information Geometry of Dropout Training}
\author{ \href{https://orcid.org/0000-0002-9953-3469}{\includegraphics[scale=0.06]{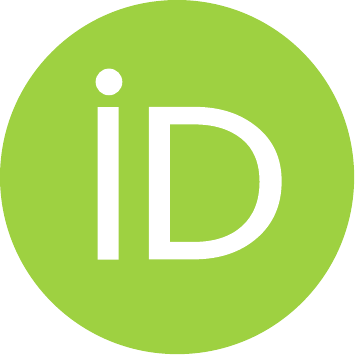}\hspace{1mm}Masanari Kimura} \\
    Department of Statistical Science\\
	School of Multidisciplinary Sciences\\
	The Graduate University for Advanced Studies (SOKENDAI)\\
	Kanagawa 240-0193, Japan \\
	\texttt{mkimura@ism.ac.jp} \\
	\And
	\href{https://orcid.org/0000-0002-6405-4361}{\includegraphics[scale=0.06]{orcid.pdf}\hspace{1mm}Hideitsu Hino} \\
	The Institute of Statistical Mathematics\\
	Tokyo 190-0014, Japan \\
	\texttt{hino@ism.ac.jp} \\
}
\begin{document}
\maketitle

\begin{abstract}
Dropout is one of the most popular regularization techniques in neural network training.
Because of its power and simplicity of idea, dropout has been analyzed extensively and many variants have been proposed.
In this paper, several properties of dropout are discussed in a unified manner from the viewpoint of information geometry.
We showed that dropout flattens the model manifold and that their regularization performance depends on the amount of the curvature.
Then, we showed that dropout essentially corresponds to a regularization that depends on the Fisher information, and support this result from numerical experiments.
Such a theoretical analysis of the technique from a different perspective is expected to greatly assist in the understanding of neural networks, which are still in their infancy.
\end{abstract}

\section{Introduction}
Deep neural networks have been experimentally successful in a variety of fields~\citep{deng2014deep,lecun2015deep,goodfellow2016deep}.
Dropout is one of the techniques that contribute to the performance improvement of neural networks~\citep{srivastava2014dropout}.
Many experimental results have reported the effectiveness of dropout, making it an important technique for training neural networks~\citep{wu2015towards,pham2014dropout,park2016analysis,labach2019survey}.
Furthermore, the simplicity of the idea of dropout has led to the proposal of a great number of variants~\citep{iosifidis2015dropelm,moon2015rnndrop,gal2017concrete,zolna2017fraternal,hou2019weighted,keshari2019guided,ma2020dropout}.
Understanding the behavior of such an important technique can be a way to know which of these variants to use, and in what cases dropout is effective in the first place.
Several empirical and theoretical analyses of dropout have been conducted~\citep{warde2013empirical,baldi2013understanding,gal2016dropout,zhao2019equivalence,garbin2020dropout,nalisnick2019dropout}.

This paper aims to clarify the behavior of dropout using information geometry~\citep{amari2016information,ay2017information}.
With this new perspective, we reveal the following properties of dropout.
First, we showed that dropout flattens the model manifold and that their regularization performance depends on the amount of the curvature.
A similar phenomenon has been investigated for Bayesian predictive models and bootstrap methods~\citep{komaki1996asymptotic,fushiki2004parametric}.
Finally, we show that dropout essentially corresponds to a regularization that depends on the Fisher information, and support this result from numerical experiments.
This result suggests that dropout and other regularization methods can be generalized via the Fisher information matrix.

\section{Preliminaries}

\subsection{Empirical risk minimization}
Let $\mathcal{X}\in\mathbb{R}^d$ be the $d$-dimensional input space and $\mathcal{Y}$ be the output space, where $d\in\mathbb{N}$.
In this paper, $\mathcal{Y}=\{1,\dots,K\}$ for $K\in\mathbb{N}$ is assumed, which corresponds to a $K$-class classification problem.

Let $\mathcal{H}$ be a hypothesis class. 
The goal of supervised learning is to obtain a hypothesis $h:\mathcal{X}\to \mathbb{Y}\ (h\in\mathcal{H})$ with the  training set $D = \{(\bm{x}_i, y_i)\}^N_{i=1}$ of sample size $N\in\mathbb{N}$ that minimizes the expected loss:
\begin{equation}
    \mathcal{R}(h) \coloneqq \mathbb{E}_{(\bm{x},y)\sim P}\Big[\ell(h(\bm{x}), y)\Big], \label{eq:expected_loss}
\end{equation}
where $P$ is the unknown distribution and $\ell: \mathbb{R}\times\mathcal{Y}\to\mathbb{R}_+$ is the loss function that measures the discrepancy between the true output value $y$ and the predicted value $\hat{y}\coloneqq h(\bm{x})$.
In general, access to the unknown distribution $P$ is forbidden.
If the i.i.d. assumption holds, i.e., $D$ is generated independently and identically from $P$, then the following empirical risk minimization (ERM) can approximate the expected risk minimization problem~\citep{vapnik1999overview,vapnik2013nature}:
\begin{equation}
\mbox{minimize} \; \hat{\mathcal{R}}(h), \quad 
    \hat{\mathcal{R}}(h) \coloneqq \frac{1}{N}\sum^N_{i=1}\ell(h(\bm{x}_i), y_i)).
\end{equation}
ERM framework is very powerful and known to be consistent:
\begin{equation}
    \mathbb{E}_{(\bm{x},y)\sim P}\Big[\hat{\mathcal{R}}(h)\Big] = \frac{1}{N}\sum^N_{i=1}\mathbb{E}_{(\bm{x},y)\sim P}\Big[\ell(h(\bm{x}), y)\Big] = \mathbb{E}\Big[\ell(h(\bm{x}), y)\Big] = \mathcal{R}(h).
\end{equation}
However, ERM often suffers from the problem of overtraining due to insufficient sample size of training data or too complex hypothesis classes (e.g. neural networks).

\subsection{Dropout training}
Especially when the hypothesis class is a set of neural networks, the dropout~\citep{srivastava2014dropout} is a frequently used technique to prevent overtraining.
Dropout training aims to improve the generalization performance by repeating the procedure of randomly deleting neurons during the training of neural networks.
Due to the simplicity of the idea and the high effectiveness witnessed from numerical experiments, many variants of dropout have been proposed~\citep{kingma2015variational,keshari2019guided,hu2020surrogate,liu2019beta,chen2020dropcluster,ke2020group}.
In addition, a number of theoretical and experimental analyses of dropout have been reported~\citep{baldi2013understanding,helmbold2015inductive,warde2013empirical}.
In this paper, dropout is once again analyzed through the lens of information geometry.

\subsection{Information geometry of machine learning}
It is known that machine learning procedures can be formulated on Riemannian manifolds~\citep{amari1995information,amari2000methods,amari2016information,ay2017information}.
\begin{definition}
Let $\mathcal{M}$ be a metric space and $U\subset\mathcal{M}$ be an open set. Then the pair $(U, \phi)$ is called a coordinate system or a chart on $\mathcal{M}$, if $\phi:U\to\phi(U)\subset\mathbb{R}^n$ is a homeomorphism of the open set $U$ in $\mathcal{M}$ onto an open set $\phi(U)$ of $\phi(P) = (x^1(P),\dots,x^n(P))$, $P \in U$, namely $x^j=u^j\circ\phi$, where $u^j:\mathbb{R}^n\to\mathbb{R}$, $u^j(a_1,\dots,a_n)=a_j$ is the $j$-th projection.
\end{definition}
The integer $n \in\mathbb{N}$ is the dimension of the coordinate system or parameter space.

\begin{definition}
An atlas $\mathcal{A}$ of dimension $n\in\mathbb{N}$ associated with the metric space $\mathcal{M}$ is a collection of coordinate systems $\{(U_\alpha, \phi_\alpha)\}_\alpha$ such that
\begin{enumerate}
    \item $\forall{\alpha}, U_\alpha\subset\mathcal{M},\ 
\bigcup_\alpha U_\alpha = \mathcal{M}$,
    \item if $U_\alpha\cap U_\beta\neq\emptyset$, the restriction to $\phi_\alpha(U_\alpha\cap U_\beta)$ of the map
    \begin{equation*}
        F_{\alpha\beta} = \phi_\beta\circ\phi^{-1}_\alpha: \phi_\alpha(U_\alpha\cap U_\beta) \to \phi_\beta(U_\alpha\cap U_\beta)
    \end{equation*}
    is differentiable from $\mathbb{R}^n$ to $\mathbb{R}^n$.
\end{enumerate}
\end{definition}

\begin{definition}
A differentiable manifold $\mathcal{M}$ is a metric space endowed with a complete atlas.
The dimension $n$ of the atlas is called the dimension of the manifold.
\end{definition}

From the definition, a differentiable manifold is required to satisfy that the Jacobian of the map $\phi_\beta\circ\phi_\alpha^{-1}$ has a maximum rank.
A hypothesis set $\mathcal{H}$ or a statistical model $\mathcal{M}$ in machine learning can be treated as a manifold.
\begin{example}[Manifold of one-layer neural networks:]
As an example, we now consider a one-layer neural network with sigmoid function $\sigma(z)=1/(1+e^{-z})$.
Let $\bm{x}\in\mathbb{R}^n$ be an $n$-dimensional input and $y=\sigma(\bm{\theta}^T\bm{x} + \theta_0)$ be the one-dimensional output, where $\bm{\theta}\in\mathbb{R}^n$ and $\theta_0\in\mathbb{R}$ are the weights and the bias.
Then the set of outputs
\begin{align*}
    \mathcal{H}_\sigma = \{h(\bm{x}; \bm{\theta}, \theta_0)\ |\ \bm{\theta}\in\mathbb{R}^n, \theta_0 \in\mathbb{R}\} = \{\sigma(\bm{\theta}^T\bm{x} + \theta_0)\ | \bm{\theta}\in\mathbb{R}^n, \theta_0\in\mathbb{R}\}
\end{align*}
can be regarded as an $(n+1)$-dimensional manifold, parameterized by $\bm{\theta}$ and $\theta_0$.

In practice, we can verify that the Jacobian matrix of $y=y(\bm{\theta}, \theta_0)$ are full-rank:
From the properties of the sigmoid function, we have
\begin{align*}
    \frac{\partial y}{\partial \theta_0} &= \sigma'(\bm{\theta}^T\bm{x} + \theta_0) = y(1-y), \\
    \frac{\partial y}{\partial \theta_j} &= \sigma'(\bm{\theta}^T\bm{x} + \theta_0)x_j = y(1-y)x_j.
\end{align*}
Then, for $a_0, a_1,\dots,a_n\in\mathbb{R}$, let $a_0\frac{\partial y}{\partial \theta_0} + \sum^n_{j=1}a_j\frac{\partial y}{\partial \theta_j} = 0$.
Since $y(1-y)\neq 0$ for finite $\bm{\theta}^\top\bm{x} + \theta_0$, the previous relation becomes $a_0 + \sum^n_{j=1}a_jx_j = 0$.
Since this relation holds for any $x_j\in\mathbb{R}$, it follows that $a_0=a_1=\cdots=a_n=0$.
Therefore, $\{\frac{\partial y}{\partial \theta_0}, \frac{\partial y}{\partial \theta_1},\dots,\frac{\partial y}{\partial \theta_n}\}$ are linearly independent, hence the rank of Jacobian of $y$ is $n+1$. 
\end{example}

Let $\Theta$ be a parameter space, and $\bm{\theta}\in\Theta$ be a parameter of some hypothesis $h(\bm{x};\bm{\theta})$.
Suppose that the hypothesis class $\mathcal{H}_{\bm{\theta}}$ has the following structure:
\begin{align*}
    \mathcal{H}_{\bm{\theta}} \coloneqq \Big\{h(\bm{x};\bm{\theta}) = \argmax_{y\in\mathcal{Y}} p(y|\bm{x}; \bm{\theta}) \mid p\in\mathcal{M}, \bm{\theta}\in\Theta \Big\}.
\end{align*}

Assume now that a hypothesis $h\in\mathcal{H}_{\bm{\theta}}$ is trained to approximate the target distribution $q$.
In general, target distributions 
are not included in $\mathcal{M}$.
In this case, we need to find a distribution $p^* = \argmin_{p \in \mathcal{M}} \ D(q, p)$ for some divergence $D$, which corresponds to the orthogonal projection of $q$ on the surface of $\mathcal{M}$.
Since we cannot observe the true target distribution $q$, we use the empirical distribution $q^{emp}$ to approximate $q$.
Here, we assume that $p$, $q$ and $q^{emp}$ are parameterized by $\bm{\theta}_p$, $\bm{\theta}_q$ and $\bm{\eta}_q$.

\section{Information geometry of dropout training}
In this section, dropout procedures are described by information geometry.
Let $\mathcal{M} \coloneqq\{p(y|\bm{x};\bm{\theta})\ |\ \bm{\theta}\in\Theta \}$ be a class of neural networks and $\mathcal{P}$ be the space of probability measures.
ERM can be regarded as finding the projection from the empirical distribution $q^{emp}(y|\bm{x};\bm{\eta}_q)$ to $\mathcal{M}$:
\begin{equation}
    \bm{\theta}^* = \argmin_{\bm{\theta}\in\Theta}D[q^{emp}\|p_{\bm{\theta}}],
\end{equation}
where $p_{\bm{\theta}} = p(y|\bm{x};\bm{\theta})$, and $D[\cdot\|\cdot] : \mathcal{P}\times\mathcal{P}\to\mathbb{R}_+$ is some divergence.
Here dropout can be viewed as considering multiple projections onto submanifolds $\{\mathcal{M}^D_k\}^K_{k=1}$ with some of the parameters $\bm{\theta}$ set to $0$ (see Fig.~\ref{fig:dropout_submanifolds}).
\begin{equation}
    \mathcal{M}^D_k = \{p(y|\bm{x};\bm{\theta}_k)\ |\ \bm{\theta}_k \in\Theta_k \subset\Theta \},\quad 
\end{equation}
where $\Theta_k\subset\Theta$ such that for $I_k \subset \{1,2,\dots,d\}$, $\bm{\theta}_{kj}=0\ (\forall j\in I_k)$ and $\{I_k\}^K_{k=1}$ is a set of possible index sets, and $K$ is the number of dropout patterns.
Here, the training of a neural network applying dropout can be regarded as obtaining the weighted average of the projections from the empirical distribution $q^{emp}(y|\bm{x};\bm{\eta}_q)$ to $\{\mathcal{M}^D_k\}^K_{k=1}$.
\begin{align}
    \bm{\theta}^*_D = \sum^K_{k=1}w_k\bm{\theta}^*_k,\quad
    \bm{\theta}^*_k = \argmin_{\bm{\theta}_k\in\Theta_k}D[q^{emp}\|p_{\bm{\theta}_k}],
\end{align}
where $\bm{w}=\{w_1,\dots,w_K\}, \; \sum_{k=1}^{K}w_k =1$.
Note that whether or not the mixture model can be written as a weighted arithmetic mean is dependent on the choice of divergences and the flatness structure of the model manifold.
In an ordinary dropout, the weights $\bm{w}$ are all identical and $w_k = \frac{1}{K}, k=1,\dots,K$. 

\begin{figure}[t]
    \centering
    \includegraphics[width=0.7\linewidth]{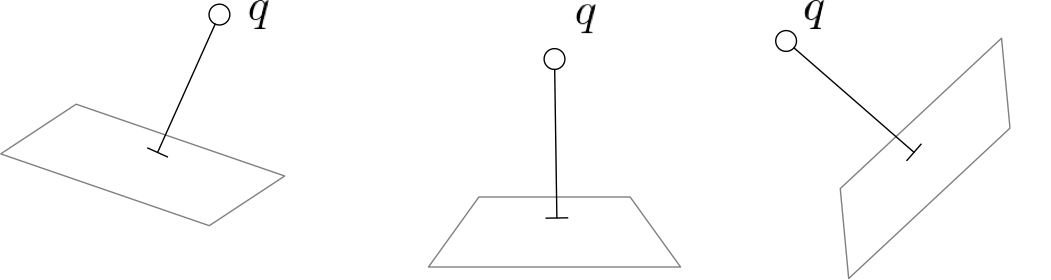}
    \caption{projections from the empirical distribution $q(y|\bm{x};\bm{\eta}_q)$ to $\{\mathcal{M}^D_k\}^\infty_{k=1}$}
    \label{fig:dropout_submanifolds}
\end{figure}

\subsection{Flatness of dropout submanifolds}
\label{sec:flatness}

Although $\bm{\theta}^*_D$ is a weighted average of the parameters on the submanifolds $\{\mathcal{M}^d_{k}\}^K_{k=1}$, this parameter is not necessarily contained in the original model manifold $\mathcal{M}$.
In what follows, the condition of $\bm{\theta}^*_D\in\mathcal{M}$ is confirmed.

\begin{definition}
A Riemannian metric $g$ on a differentiable manifold $\mathcal{M}$ is a symmetric, positive definite $2$-covariant tensor field.
A Riemannian manifold is a differentiable manifold $\mathcal{M}$ endowed with a Riemannian metric $g$.
\end{definition}
\begin{definition}
A function $f:\mathcal{M}\to\mathbb{R}$ is said to be differentiable if for any chart $(U, \phi)$ on $\mathcal{M}$ the function $f\circ\phi^{-1}:\phi(U)\to\mathbb{R}$ is differentiable. The set of all differentiable functions on the manifold $\mathcal{M}$ is denoted by $\mathcal{F}(\mathcal{M})$.
\end{definition}
\begin{definition}
A vector field $X$ on $\mathcal{M}$ is a smooth map $X$ that assigns to each point $p\in\mathcal{M}$ a vector $X_p$ in $T_p\mathcal{M}$.
For any function $f\in\mathcal{F}(\mathcal{M})$ we define the real-valued function $(Xf)_p = X_p f$.
By smooth we mean the following: for each $f\in\mathcal{F}(\mathcal{M})$,  $Xf\in\mathcal{F}(\mathcal{M})$.
The set of all vector fields on $\mathcal{M}$ is denoted by $\mathcal{X}(\mathcal{M})$.
\end{definition}
\begin{definition}
A function $T$ which satisfies
\begin{align}
    T(af + bg) = aT(f) bT(g), \quad \forall{a,b\in\mathbb{R}}, \forall{f,g\in\mathcal{F}(\mathcal{M})}
\end{align}
is said to be $\mathbb{R}$-linear.
\end{definition}
\begin{definition}
A function $T$ which satisfies
\begin{align}
    T(f_1X_1, f_2X_2) &= f_1f_2T(X_1,X_2), \\
    T(X_1 + Y_1, X_2) &= T(X_1, X_2) + T(Y_1, X_2), \\
    T(X_1, X_2 + Y_2) &= T(X_1, X_2) + T(X_1, Y_2)
\end{align}
for any $f_1,f_2\in\mathcal{F}(\mathcal{M})$ and $X_1,X_2,Y_1,Y_2\in\mathcal{X}(\mathcal{M})$ is said to be $\mathcal{F}(\mathcal{M})$-linear.
\end{definition}

The differential of a function $f\in\mathcal{F}(\mathcal{M})$ is defined at any point $p$ by $(df)_p:T_p\mathcal{M}\to\mathbb{R}$,
\begin{align}
    (df)_p(v) = v(f), \quad \forall v\in T_p\mathcal{M}.
\end{align}
In local coordinates $(\theta^1,\dots,\theta^n)$ this takes the form $df=\sum_i \frac{\partial f}{\partial \theta^i}d\theta^i$, where $\{d\theta^i\}$ is the dual basis of $\{\partial / \partial\theta^i\}$ of $T_p\mathcal{M}$, i.e.,
\begin{align}
    d\theta^i\left(\frac{\partial}{\partial\theta^j}\right) = \delta^i_j,
\end{align}
where $\delta^i_j$ denotes the Kronercker symbol.
The space spanned by $\{d\theta^1,\dots,d\theta^n\}$ is called the cotangent space of $\mathcal{M}$ at $p$, and is denoted by $T^*_p\mathcal{M}$.
The elements of $T^*_p\mathcal{M}$ are called covectors.
The differential $df$ is an example of $1$-form.
In general, a $1$-form $\omega$ on the manifold $\mathcal{M}$ is a map which assigns an element $\omega_p\in T^*_p\mathcal{M}$ to each point $p\in\mathcal{M}$.
A $1$-form can be written in local coordinates as
\begin{align}
    \omega = \sum^n_{i=1}\omega_i d\theta^i,
\end{align}
where $\omega_i = \omega(\frac{\partial}{\partial\theta^i})$ is the $i$-th coordinate of the form with respect to the basis $\{d\theta^i\}$.

Riemannian manifold will be denoted from now on by the pair $(\mathcal{M}, g)$.
The Riemannian metric $g$ can be considered as a positive definite scalar product $g_p:T_p\mathcal{M}\times T_p\mathcal{M}\to\mathbb{R}$ that depends on the point $p\in\mathcal{M}$, where $T_p\mathcal{M}$ is a tangent space of $\mathcal{M}$ at $p$.
In local coordinates it can be written as $g = g_{ij}d\theta^id\theta^j$ with $g_{ij}=g_{ji}=g(\partial_i, \partial_j)$.
Here, and hereafter, the Einstein summation convention will be assumed, so that summation will be automatically taken over indices repeated twice in the term, e.g., $\bm{a}^i\bm{b}_i = \sum_{i} \bm{a}^i\bm{b}_i$.
It is known that a natural metric $g$ on a statistical manifold is the Fisher information matrix~\citep{amari2000methods}.

\begin{definition}
A linear connection $\nabla$ on a differentiable manifold $\mathcal{M}$ is a map $\nabla:\mathcal{X}(\mathcal{M})\times\mathcal{X}(\mathcal{M})\to\mathcal{X}(\mathcal{M})$ with the following properties:
\begin{enumerate}
    \item $\nabla_XY$ is $\mathcal{F}(\mathcal{M})$-linear in $\mathcal{X}$;
    \item $\nabla_XY$ is $\mathbb{R}$-linear in $Y$;
    \item It satisfies the Leibniz rule:
    \begin{equation}
        \nabla_X(fY) = (Xf)Y + f\nabla_XY,\quad \forall f\in\mathcal{F}(\mathcal{M}),
    \end{equation}
\end{enumerate}
where $\mathcal{X}(\mathcal{M})$ is the set of vector fields on $\mathcal{M}$ and $\mathcal{F}(\mathcal{M})$ is the set of all differentiable functions on $\mathcal{M}$.
\end{definition}
In a local coordinates system $(\theta^1,\dots,\theta^n)$, one can write
\begin{equation}
    \nabla_{\partial_i}\partial_j = \Gamma^k_{ij}\partial_k,
\end{equation}
where $\Gamma^k_{ij}$ are the coordinates of the connection with respect to the local base $\{\partial_i\}$, where $\partial_i = \frac{\partial}{\partial\theta^i}$.

The followings are definitions that are important for discussing the geometric location between $\mathcal{M}$ and the probability distribution generated by dropout.
\begin{definition}
\label{def:torsion}
Let $\nabla$ be a linear connection.
The torsion tensor is defined as
\begin{align}
    T:\mathcal{X}(\mathcal{M})\times\mathcal{X}(\mathcal{M})\to\mathcal{X}(\mathcal{M}) \nonumber \\
    T(X,Y) = \nabla_XY - \nabla_YX - [X,Y],
\end{align}
where $[\cdot,\cdot]:\mathcal{X}(\mathcal{M})\times\mathcal{X}(\mathcal{M})\to\mathcal{X}(\mathcal{M})$ is the Lie bracket:
\begin{align}
    [X,Y]_pf &= X_p(Yf) - Y_p(Xf), \quad \forall f\in\mathcal{F}, p\in\mathcal{M}, \\
    [X,Y] &= \Big( (\partial_j Y^i ) X^j - (\partial_j X^i) Y^j\Big) \partial_i.
\end{align}
In the coordinate representation, with $X= \partial_i, Y= \partial_j$ and $\gamma^{k}_{ij} \partial_k = [\partial_i,\partial_j]$, the torsion tensor is given by
\begin{equation}
    T^{k}_{ij} = \Gamma^{k}_{ij} - \Gamma^{k}_{ji} - \gamma^{k}_{ij}.
\end{equation}
\end{definition}
\begin{definition}
\label{def:curvature}
Let $\nabla$ be a linear connection. The curvature tensor is defined as
\begin{align}
    R:\mathcal{X}(\mathcal{M})\times\mathcal{X}(\mathcal{M})\times\mathcal{X}(\mathcal{M})\to\mathcal{X}(\mathcal{M}) \nonumber \\
    R(X, Y, Z) = \nabla_X\nabla_YZ - \nabla_Y\nabla_XZ - \nabla_{[X,Y]}Z.
\end{align}
\end{definition}
\begin{definition}
A connection $\nabla$ is called flat in a given system of coordinates if its components vanish: $\Gamma^k_{ij}=0$.
\end{definition}
\begin{proposition}
The torsion and the curvature of a flat connection are zero.
\end{proposition}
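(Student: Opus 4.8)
The plan is to compute both tensors directly in the coordinate system in which $\nabla$ is flat, obtain the value $0$ there, and then invoke the tensorial (pointwise, $\mathcal{F}(\mathcal{M})$-linear) character of $T$ and $R$ to promote this chart-local conclusion to a global, coordinate-free one. Fix a chart $(\theta^1,\dots,\theta^n)$ in which $\Gamma^k_{ij}=0$ and work throughout with the coordinate frame $\{\partial_i = \partial/\partial\theta^i\}$, recalling that in such a frame the Lie bracket vanishes, $[\partial_i,\partial_j]=0$, since partial derivatives commute, so $\gamma^k_{ij}=0$.

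For the torsion, I would evaluate $T$ on the coordinate vector fields and use the coordinate expression from Definition~\ref{def:torsion}, namely $T^k_{ij} = \Gamma^k_{ij} - \Gamma^k_{ji} - \gamma^k_{ij}$. With $\Gamma^k_{ij}=\Gamma^k_{ji}=0$ and $\gamma^k_{ij}=0$ we get $T(\partial_i,\partial_j)=T^k_{ij}\partial_k=0$ for all $i,j$. For the curvature, I would likewise expand $R(\partial_i,\partial_j,\partial_k) = \nabla_{\partial_i}\nabla_{\partial_j}\partial_k - \nabla_{\partial_j}\nabla_{\partial_i}\partial_k - \nabla_{[\partial_i,\partial_j]}\partial_k$ using only the defining properties of a linear connection (the Leibniz rule and $\mathbb{R}$-linearity): from $\nabla_{\partial_j}\partial_k=\Gamma^m_{jk}\partial_m$ one obtains the familiar formula $R^l_{ijk}=\partial_i\Gamma^l_{jk}-\partial_j\Gamma^l_{ik}+\Gamma^m_{jk}\Gamma^l_{im}-\Gamma^m_{ik}\Gamma^l_{jm}$, in which every summand carries a factor that is either a Christoffel symbol or a derivative of one. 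In the flat chart all Christoffel symbols are identically zero (hence their derivatives vanish too), and the last term of $R$ is killed by $[\partial_i,\partial_j]=0$; thus $R(\partial_i,\partial_j,\partial_k)=0$ for all $i,j,k$.

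To finish, I would observe that $T$ and $R$ are $\mathcal{F}(\mathcal{M})$-linear in each of their arguments — a routine verification from the Leibniz rule and the linearity axioms of $\nabla$, e.g. $T(fX,gY)=fg\,T(X,Y)$ and similarly for $R$ — so their value at a point $p$ depends only on the values of the argument vector fields at $p$. Since an arbitrary tangent vector at any point is a linear combination of the $\partial_i$, the vanishing on the coordinate frame established above forces $T\equiv 0$ and $R\equiv 0$ as tensor fields, and these statements are chart-independent.

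The computation here is entirely routine; the only point deserving care is conceptual rather than technical, namely keeping straight that ``flat in a given system of coordinates'' is an assertion in a single chart, and that it is precisely the $\mathcal{F}(\mathcal{M})$-linearity of $T$ and $R$ that licenses upgrading the chart-local vanishing to the coordinate-free statement of the proposition. I would therefore highlight the $\mathcal{F}(\mathcal{M})$-linearity check as the (small) crux and treat the index manipulations as bookkeeping.
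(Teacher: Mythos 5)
Your proposal is correct and follows essentially the same route as the paper: substitute $\Gamma^k_{ij}=0$ (and $\gamma^k_{ij}=0$ for the coordinate frame) into the coordinate expressions $T^k_{ij}=\Gamma^k_{ij}-\Gamma^k_{ji}-\gamma^k_{ij}$ and $R^l_{ijk}=\partial_i\Gamma^l_{jk}-\partial_j\Gamma^l_{ik}+\Gamma^m_{jk}\Gamma^l_{im}-\Gamma^m_{ik}\Gamma^l_{jm}$. Your added remark on the $\mathcal{F}(\mathcal{M})$-linearity of $T$ and $R$, which upgrades the chart-local computation to a coordinate-free statement, is a welcome clarification that the paper leaves implicit, but it does not change the substance of the argument.
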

\begin{proof}
From Def~\ref{def:torsion} and \ref{def:curvature},
\begin{align}
    T^k_{ij} &= \Gamma^k_{ij} - \Gamma^k_{ji} \\
    R^r_{ijk} &= \partial_i\Gamma^r_{jk} - \partial_j\Gamma^r_{ik} + \Gamma^r_{ih}\Gamma^h_{jk} - \Gamma^r_{jh}\Gamma^h_{ik}.
\end{align}
Substituting $\Gamma^k_{ij} = 0 \ (\forall i,j,k)$ into these gives the proof of the proposition.
\end{proof}

\begin{figure}[t]
    \centering
    \includegraphics[width=0.7\linewidth]{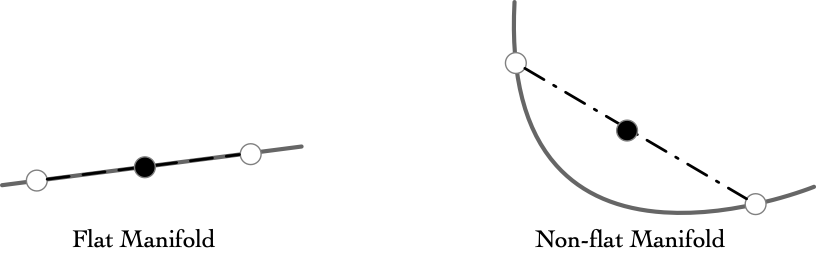}
    \caption{Flat and non-flat manifolds.Whether or not the weighted average between two points is contained in the original manifold is characterized by the flatness of the manifold.}
    \label{fig:flat_vs_nonflat}
\end{figure}

From the above, one can see that a flat manifold is neither curved nor torsional.
Figure~\ref{fig:flat_vs_nonflat} shows the weighted average between two points on a flat manifold and a non-flat manifold.
From the above, $\bm{\theta}^*_D$ is included in $\mathcal{M}$ if the model manifold is flat in the parameter coordinate system.

\subsection{$\alpha$-integration for the dropout training}
\label{subsec:alpha_integration}
In general, it is known that the way parameters are represented in statistical models is not unique.
For example, Gaussian distributions, with natural parameters $(\mu, \sigma^2)$ and polar coordinates representation are equivalent but coordinate repesentation is different.
Now, the neural network is parameterized by $\bm{\theta}$, and the geometry was discussed by considering this $\bm{\theta}$ as a coordinate system ($\bm{\theta}$-coordinate system).
Since the bijections of $\bm{\theta}$ also identify $p\in\mathcal{M}$, other coordinate systems can be considered .
\begin{definition}
Let $(\mathcal{M}, g)$ be a Riemannian manifold.
Two linear connections $\nabla$ and $\nabla^*$ on $\mathcal{M}$ are called dual, with respect to the metric $g$, if
\begin{equation}
    Zg(X,Y) = g(\nabla_ZX, Y) + g(X, \nabla^*_ZY), \quad \forall X,Y,Z\in\mathcal{X}(\mathcal{M}).
\end{equation}
\end{definition}
In local coordinates, 
the duality condition can be expressed as
\begin{equation}
    \partial_k g_{ij} = \Gamma_{ki,j} + \Gamma^*_{kj,i}, \label{eq:local_duality}
\end{equation}
where $\Gamma_{ki,j}=g_{jm}\Gamma^m_{ki}$ and $\Gamma^*_{kj,i}=g_{im}\Gamma^{*m}_{kj}$ are two coordinate components of connections $\nabla$ and $\nabla^*$.
The triplet $(g, \nabla, \nabla^*)$ is called a dualistic structure on $\mathcal{M}$, and quadruple $(\mathcal{M}, g, \nabla, \nabla^*)$ is called a Riemannian manifold endowed with a dualistic structure.
\begin{proposition}
Given a linear connection $\nabla$ on the Riemannian manifold $(\mathcal{M}, g)$, there is a unique connection $\nabla^*$ dual to $\nabla$.
\end{proposition}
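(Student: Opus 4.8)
The plan is to read the duality condition $Zg(X,Y) = g(\nabla_Z X, Y) + g(X, \nabla^*_Z Y)$ as a \emph{definition} of $\nabla^*$ and then verify that the object so defined is indeed a linear connection. The key structural fact I will exploit is that $g$ is positive definite, hence non-degenerate: a vector field $W$ on $\mathcal{M}$ is completely determined by the scalar function $X\mapsto g(X,W)$, since $g(X,W)=0$ for all $X\in\mathcal{X}(\mathcal{M})$ forces $W=0$.

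First I would fix $Y,Z\in\mathcal{X}(\mathcal{M})$ and consider the assignment $X\mapsto Zg(X,Y)-g(\nabla_Z X,Y)$. I would check that this is $\mathcal{F}(\mathcal{M})$-linear in $X$: replacing $X$ by $fX$ for $f\in\mathcal{F}(\mathcal{M})$, the term $Z\!\big(f\,g(X,Y)\big)$ contributes $(Zf)g(X,Y)$ while $-g(\nabla_Z(fX),Y)=-g\big((Zf)X+f\nabla_Z X,Y\big)$ contributes $-(Zf)g(X,Y)$; these cancel, leaving $f$ times the original expression. Since $g$ is non-degenerate, there is therefore a unique vector field, call it $\nabla^*_Z Y$, with $g(X,\nabla^*_Z Y)=Zg(X,Y)-g(\nabla_Z X,Y)$ for all $X$. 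This simultaneously produces the candidate for existence and pins down the only possible dual connection, so uniqueness follows at once: if $\tilde\nabla$ is any connection dual to $\nabla$, then $g(X,\tilde\nabla_Z Y)=g(X,\nabla^*_Z Y)$ for all $X$, whence $\tilde\nabla_Z Y=\nabla^*_Z Y$.

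It then remains to verify the three axioms of a linear connection for $\nabla^*$. For $\mathcal{F}(\mathcal{M})$-linearity in $Z$, replacing $Z$ by $fZ$ multiplies the right-hand side of the defining relation by $f$, so $g(X,\nabla^*_{fZ}Y)=g(X,f\nabla^*_Z Y)$ and non-degeneracy gives $\nabla^*_{fZ}Y=f\nabla^*_Z Y$. $\mathbb{R}$-linearity in $Y$ is immediate from linearity of $g$, of $Z$ as a derivation, and of $\nabla$ in the appropriate slot. For the Leibniz rule I would compute $g(X,\nabla^*_Z(fY))=Zg(X,fY)-g(\nabla_Z X,fY)=(Zf)g(X,Y)+fZg(X,Y)-fg(\nabla_Z X,Y)=g\big(X,(Zf)Y+f\nabla^*_Z Y\big)$, and non-degeneracy yields $\nabla^*_Z(fY)=(Zf)Y+f\nabla^*_Z Y$. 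Finally I would record the coordinate form: evaluating the defining relation at $X=\partial_i$, $Y=\partial_j$, $Z=\partial_k$ recovers exactly~\eqref{eq:local_duality}, $\partial_k g_{ij}=\Gamma_{ki,j}+\Gamma^*_{kj,i}$, which since $(g_{ij})$ is invertible can be solved uniquely as $\Gamma^{*m}_{kj}=g^{im}\big(\partial_k g_{ij}-\Gamma_{ki,j}\big)$, giving an explicit local description of $\nabla^*$ and re-confirming both existence and uniqueness.

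The only genuine subtlety — the step I would flag — is the tensoriality check in the second paragraph: one must confirm the $(Zf)$-terms cancel so that $X\mapsto Zg(X,Y)-g(\nabla_Z X,Y)$ is truly $\mathcal{F}(\mathcal{M})$-linear in $X$, which is what licenses the appeal to non-degeneracy of $g$; the same cancellation reappears in the Leibniz computation. Everything else is bookkeeping with the connection axioms and the symmetry of $g$.
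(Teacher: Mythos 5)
Your proof is correct, and in fact more complete than the paper's own argument, but it travels a different road. The paper works entirely in a coordinate chart: it takes the local duality relation~\eqref{eq:local_duality}, solves it for the components $\Gamma^*_{kj,i}=\partial_k g_{ij}-\Gamma_{ki,j}$, raises an index with $g^{lk}$ to declare $\nabla^*_{\partial_i}\partial_j=\Gamma^{*k}_{ij}\partial_k$, and reads off uniqueness from the fact that the same equation determines those components. You instead define $\nabla^*_Z Y$ globally as the metric dual of the $1$-form $X\mapsto Zg(X,Y)-g(\nabla_Z X,Y)$, which requires the tensoriality check in $X$ that you rightly flag as the one non-trivial step, and then you verify the three connection axioms. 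What your route buys is precisely what the paper omits: the paper never confirms that the object built from the $\Gamma^{*k}_{ij}$ transforms as a connection, nor that the resulting $\nabla^*$ satisfies the Leibniz rule --- facts that are automatic once one checks, as you do, that the defining expression is $\mathcal{F}(\mathcal{M})$-linear in $X$ and that the $(Zf)$-terms cancel correctly. What the paper's route buys is brevity and an immediately explicit formula, which you also recover at the end as $\Gamma^{*m}_{kj}=g^{im}\bigl(\partial_k g_{ij}-\Gamma_{ki,j}\bigr)$; note in passing that the paper's displayed formula $\Gamma^*_{ij,l}=\partial_i g_{ij}-\Gamma_{il,j}$ has a clashing index $j$ and your version is the correctly typed one. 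Both arguments rest on the same pivot --- non-degeneracy (equivalently, invertibility of $(g_{ij})$) --- so the mathematical content agrees; yours is simply the invariant formulation.
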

\begin{proof}
It suffices to prove the property locally, in a coordinate chart.
For existence, since the connection $\nabla$ is given, its components $\Gamma_{ij,l}$ are known.
Let $\Gamma^*_{ij,l}=\partial_i g_{ij}-\Gamma_{il,j}$ and $\Gamma^{*k}_{ij}=\Gamma^*_{ij,l}g^{lk}$, and construct the dual connection by $\nabla^*_{\partial_i}\partial_j=\Gamma^{*k}_{ij}\partial_k$.
From Eq~\eqref{eq:local_duality}, it follows that $\nabla^*$ is dual to $\nabla$.
For uniqueness, from Eq.~\eqref{eq:local_duality} the connection comopnents $\Gamma^*_{kj,i}$ of $\nabla^*$ are uniquely determined given the metric $g_{ij}$ and the connection coefficients $\Gamma_{ki,j}$ of $\nabla$.
\end{proof}
\begin{definition}{(Dual $\alpha$-connections\cite{amari2000methods})}
Let $\nabla$ and $\nabla^*$ be two dual torsion-free connections, with respect to the metric $g$.
Consider the one-parameter family of connections given by the convex combination of the foregoing dual connections as
\begin{equation}
    \nabla^{(\alpha)} = \frac{1+\alpha}{2}\nabla^* + \frac{1-\alpha}{2}\nabla, \quad \alpha\in\mathbb{R}.
\end{equation}
$\nabla^{(\alpha)}$ is called the $\alpha$-connection, and play a central role in the study of statistical manifolds.
\end{definition}
\begin{proposition}
$\nabla^{(\alpha)}$ and $\nabla^{(-\alpha)}$ are dual connections with respect to the metric $g$.
\end{proposition}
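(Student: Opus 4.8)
The plan is to obtain the duality relation for the pair $(\nabla^{(\alpha)},\nabla^{(-\alpha)})$ as a convex combination of two instances of the defining identity for the base pair $(\nabla,\nabla^*)$. First I would record that duality is a \emph{symmetric} relation. By hypothesis $\nabla$ and $\nabla^*$ are dual with respect to $g$, so
\begin{equation}
    Z g(X,Y) = g(\nabla_Z X, Y) + g(X, \nabla^*_Z Y), \qquad \forall X,Y,Z\in\mathcal{X}(\mathcal{M}). \label{eq:adual1}
\end{equation}
Interchanging $X$ and $Y$ in \eqref{eq:adual1} and using $g(X,Y)=g(Y,X)$ yields the companion identity
\begin{equation}
    Z g(X,Y) = g(\nabla^*_Z X, Y) + g(X, \nabla_Z Y), \label{eq:adual2}
\end{equation}
so no appeal to the uniqueness proposition is needed to get \eqref{eq:adual2}.

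Before combining, I would note that $\nabla^{(\alpha)}$ is a bona fide linear connection: since $\tfrac{1+\alpha}{2}+\tfrac{1-\alpha}{2}=1$, the combination $\nabla^{(\alpha)}=\tfrac{1+\alpha}{2}\nabla^*+\tfrac{1-\alpha}{2}\nabla$ inherits $\mathcal{F}(\mathcal{M})$-linearity in the first slot and $\mathbb{R}$-linearity in the second, and the inhomogeneous Leibniz term $(Zf)Y$ reappears with total coefficient $1$; the same holds for $\nabla^{(-\alpha)}$. Then the computation: multiply \eqref{eq:adual1} by $\tfrac{1-\alpha}{2}$, multiply \eqref{eq:adual2} by $\tfrac{1+\alpha}{2}$, and add. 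The left-hand side collapses to $Z g(X,Y)$ because the weights sum to $1$, while $\mathcal{F}(\mathcal{M})$-bilinearity of $g$ regroups the right-hand side as
\begin{equation*}
    g\!\left(\tfrac{1+\alpha}{2}\nabla^*_Z X+\tfrac{1-\alpha}{2}\nabla_Z X,\;Y\right)+g\!\left(X,\;\tfrac{1-\alpha}{2}\nabla^*_Z Y+\tfrac{1+\alpha}{2}\nabla_Z Y\right)=g\!\left(\nabla^{(\alpha)}_Z X,Y\right)+g\!\left(X,\nabla^{(-\alpha)}_Z Y\right),
\end{equation*}
which is exactly the assertion that $\nabla^{(\alpha)}$ and $\nabla^{(-\alpha)}$ are dual with respect to $g$.

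Honestly, there is no serious obstacle here; the whole content is the symmetry observation \eqref{eq:adual2} together with linearity of $g$, and the only things to be careful about are not to skip the connection check and not to mismatch the weights $\tfrac{1\pm\alpha}{2}$ with the two slots of $g$. If a coordinate argument is preferred, the same conclusion drops straight out of \eqref{eq:local_duality}: writing $\Gamma^{(\alpha)}_{ki,j}=\tfrac{1+\alpha}{2}\Gamma^*_{ki,j}+\tfrac{1-\alpha}{2}\Gamma_{ki,j}$ and the analogue for $\Gamma^{(-\alpha)}_{kj,i}$, the sum $\Gamma^{(\alpha)}_{ki,j}+\Gamma^{(-\alpha)}_{kj,i}$ equals $\tfrac{1+\alpha}{2}\bigl(\Gamma^*_{ki,j}+\Gamma_{kj,i}\bigr)+\tfrac{1-\alpha}{2}\bigl(\Gamma_{ki,j}+\Gamma^*_{kj,i}\bigr)$, and each bracket equals $\partial_k g_{ij}$ by \eqref{eq:local_duality} (using $g_{ij}=g_{ji}$), so the total is $\partial_k g_{ij}$, which is the local duality condition for the $\alpha$-pair. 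I would present the invariant version in the main text and perhaps relegate the coordinate check to a remark.
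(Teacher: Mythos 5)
Your proof is correct and follows essentially the same route as the paper: both arguments amount to taking the convex combination, with weights $\tfrac{1\pm\alpha}{2}$, of the duality identity for $(\nabla,\nabla^*)$ and its $X\leftrightarrow Y$-swapped companion, then regrouping using the bilinearity and symmetry of $g$ to land on $Zg(X,Y)=g(\nabla^{(\alpha)}_ZX,Y)+g(X,\nabla^{(-\alpha)}_ZY)$. Your additional checks --- that $\nabla^{(\pm\alpha)}$ are genuine linear connections because the weights sum to one, and the coordinate-level verification via Eq.~\eqref{eq:local_duality} --- are sound and go slightly beyond what the paper records, but they do not alter the underlying argument.
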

\begin{proof}
From the duality of connections $\nabla$ and $\nabla^*$,
\begin{align}
    g(\nabla^{(\alpha)}_ZX, Y) &= \frac{1+\alpha}{2}g(\nabla^*_ZX, Y) + \frac{1-\alpha}{2}g(\nabla_ZX, Y) \nonumber \\
    &= \frac{1+\alpha}{2}\Big(Zg(X,Y) - g(X, \nabla_ZY)\Big) + \frac{1-\alpha}{2}g(\nabla_ZX, Y) \nonumber \\
    &= \frac{1+\alpha}{2}Zg(X,Y) - \frac{1+\alpha}{2}g(X, \nabla_ZY) + \frac{1-\alpha}{2}g(\nabla_ZX, Y).
\end{align}
Similarly,
\begin{equation}
    g(\nabla^{(-\alpha)}_ZY, X) = \frac{1-\alpha}{2}Zg(Y,X) - \frac{1-\alpha}{2}g(Y, \nabla_ZX) + \frac{1+\alpha}{2}g(\nabla_ZY, X).
\end{equation}
These relations yield
\begin{equation}
    g(\nabla^{(\alpha)}_ZX, Y) + g(\nabla^{(-\alpha)}_ZY, X) = Zg(X,Y),
\end{equation}
which shows that $\nabla^{(\alpha)}$ and $\nabla^{(-\alpha)}$ are dual connections.
\end{proof}

Now consider straight curves in the $\alpha$-coordinate systems using the following function.
\begin{definition}[$f$-interpolation~\citep{e23050528}]
\label{def:f_interpolation}
For any $a,b,\in\mathbb{R}$, some $\lambda\in[0,1]$ and some $\alpha\in\mathbb{R}$, $f$-interpolation of $a$ and $b$ is defined as
\begin{equation}
    m_f^{(\lambda,\alpha)}(a,b) \coloneqq c_\alpha f^{-1}_\alpha\Big\{(1-\lambda) f_{\alpha}(a) + \lambda f_{\alpha}(b) \Big\}, \label{eq:f_interpolation}
\end{equation}
where, $c_\alpha=2^{2/(1-\alpha)}$ and
\begin{equation}
    f_\alpha(a) \coloneqq \begin{cases}
    a^{\frac{1-\alpha}{2}} & (\alpha\neq 1) \\
    \log a & (\alpha = 1)
    \end{cases}
\end{equation}
is the function that defines the $f$-mean~\citep{hardy1952inequalities}.
\end{definition}
We can easily see that this family includes various known weighted means including the $e$-mixture and $m$-mixture for $\alpha=\pm 1$ in the literature of information geometry~\citep{amari2016information}:
\begin{align*}
&    m_f^{(\lambda,1)}(a,b) = (1-\lambda)\ln a + \lambda \ln b, & \;
     m_f^{(\lambda, -1)}(a,b) = (1-\lambda)a + \lambda b,\\ 
  & m_f^{(\lambda, 0)}(a,b) = \Big((1-\lambda)\sqrt{a} + \lambda\sqrt{b}\Big)^2, & \;
 m_f^{(\lambda, 3)}(a,b) = \frac{1}{(1-\lambda)\frac{1}{a} + \lambda\frac{1}{b}}. 
\end{align*}
Here, the normalization term is omitted for simplicity.
It has been shown that Eq.~\eqref{eq:f_interpolation} connects two probability distributions with a straight curve (geodesic) on the $\alpha$-coordinate system.

Then, one can consider the generalized weighted average as
\begin{equation}
    p(y|\bm{x};\bm{\theta}^*_\alpha) = f^{-1}_{\alpha}\Big\{\sum w_k f_\alpha(p(y|\bm{x};\bm{\theta}^*_k)) - C\Big\},
\end{equation}
where $C$ is a normalization term.

This is nothing but an $\alpha$-integration~\citep{amari2007integration} of the probability distributions, which is known to be optimal under $\alpha$-divergence $D_\alpha:\mathcal{P}\times\mathcal{P}\to\mathbb{R}_+$:
\begin{align}
\label{eq:mmix}
    \bm{\theta}^*_\alpha &= \argmin_{\bm{\theta}} R_\alpha(p(y|\bm{x};\bm{\theta})) \\
    R_\alpha(p(y|\bm{x};\bm{\theta})) &= \sum w_k D_\alpha\Big[p_{\bm{\theta}^*_k}\|p_{\bm{\theta}}\Big].
\end{align}
Using this, we can see that the parameters obtained by dropout based on $\alpha$-integration are included in the original $\mathcal{M}$ if and only if the model manifold is $\alpha$-flat.

By the way, from the results of existing theoretical analysis on dropout, it is known that dropout can be regarded as generating the geometric mean of a probability distribution~\citep{baldi2013understanding}.
This means that dropout can be viewed as manipulating a probability distribution in the $1$-coordinate system (or $e$-coordinate system), which is $\bm{\theta}\in\mathcal{M}$ if and only if the model manifold is $1$-flat, i.e.,
\begin{align}
    \bm{\theta}^*_1 \in \mathcal{M},
\end{align}
where
\begin{align}
    \bm{\theta}^*_1 = \argmin_{\bm{\theta}}\sum w_k D_{KL}\left[p_{\bm{\theta}^*_k}\|p_{\bm{\theta}}\right],
\end{align}
since $D_1 = D_{KL}$.
\begin{proposition}
The parameters estimated by dropout are included in the original model manifold $\mathcal{M}$ if and only if $\mathcal{M}$ is $1$-flat.
\end{proposition}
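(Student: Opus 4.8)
The plan is to assemble the proposition from two ingredients already in hand: the characterization of dropout's ensemble prediction as a weighted geometric mean of the sub-model distributions \citep{baldi2013understanding}, and the equivalence, built above out of the $f$-interpolation of Definition~\ref{def:f_interpolation}, between ``the $\alpha$-integration of points of $\mathcal{M}$ returns a point of $\mathcal{M}$'' and ``$\mathcal{M}$ is $\alpha$-flat.'' The proposition is then precisely the $\alpha=1$ instance of the latter, with the value $\alpha=1$ pinned down by the former.

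First I would make the reduction to $\alpha$-integration explicit. Writing $p_{\bm{\theta}^*_k}$ for the projection of $q^{emp}$ onto the dropout submanifold $\mathcal{M}^D_k$, the Baldi--Sadowski analysis says the distribution computed by the dropout network is proportional to $\prod_k p_{\bm{\theta}^*_k}^{\,w_k}$, which is exactly the $f$-interpolation $m_f^{(\cdot,1)}$ (with $f_1=\log$), i.e. the $\alpha$-integration at $\alpha=1$. Since $D_1=D_{KL}$, the dropout-estimated parameter is the $\argmin$ of $R_\alpha$ in Eq.~\eqref{eq:mmix} at $\alpha=1$, namely $\bm{\theta}^*_1=\argmin_{\bm{\theta}}\sum_k w_k D_{KL}[p_{\bm{\theta}^*_k}\|p_{\bm{\theta}}]$. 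This identifies ``the parameters estimated by dropout'' with $\bm{\theta}^*_1$.

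For the ``if'' direction, suppose $\mathcal{M}$ is $1$-flat. Then there is a coordinate system in which the components of $\nabla^{(1)}$ vanish, so by the proposition on flat connections its torsion and curvature are zero and the $1$-geodesics are coordinate straight lines; in those coordinates the $\alpha=1$ interpolation of the $p_{\bm{\theta}^*_k}$ is the affine combination of their coordinate vectors with weights $w_k$, and since the coordinate image is closed under such combinations (it is an exponential-family chart) the integrated distribution, and hence $\bm{\theta}^*_1$, lies in $\mathcal{M}$. For the ``only if'' direction I would argue contrapositively: if $\mathcal{M}$ is not $1$-flat, the curvature $R^{(1)}$ of $\nabla^{(1)}$ does not vanish, so $\mathcal{M}$ is not $1$-autoparallel in the ambient space $\mathcal{P}$ and there exist $p_1,p_2\in\mathcal{M}$ whose $e$-mixture leaves $\mathcal{M}$ already to second order in the mixing weight; choosing dropout index sets and weights whose projections realize $p_1,p_2$ then yields a configuration with $\bm{\theta}^*_1\notin\mathcal{M}$.

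The main obstacle is the ``only if'' step. One must ensure that the two-point configurations witnessing non-flatness are actually attainable, or approximable, as dropout projections $p_{\bm{\theta}^*_k}$, and one must reconcile the \emph{intrinsic} reading of ``$1$-flat'' ($R^{(1)}\equiv 0$ on $\mathcal{M}$) with the \emph{extrinsic} property really at play, that $\mathcal{M}$ is $1$-autoparallel in $\mathcal{P}$ --- equivalently, that $\mathcal{M}$ is an exponential family. These coincide for the statistical manifolds under consideration provided $\mathcal{P}$ is taken dually flat and ``$1$-flat'' is read as ``$e$-autoparallel in $\mathcal{P}$''; stating the hypothesis in that form is what makes the equivalence clean, after which the remaining verifications are routine given the $f$-interpolation machinery of Section~\ref{subsec:alpha_integration}.
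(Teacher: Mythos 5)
Your proposal follows essentially the same route as the paper, which states this proposition with no formal proof, as a summary of the preceding discussion: dropout is identified with the weighted geometric mean of the sub-model distributions (Baldi--Sadowski), i.e.\ the $\alpha$-integration at $\alpha=1$, and closure of $\mathcal{M}$ under $\alpha$-integration is equated with $\alpha$-flatness. You are in fact more careful than the paper --- in particular your observation that the operative notion is $e$-autoparallelism of $\mathcal{M}$ in $\mathcal{P}$ rather than intrinsic vanishing of $R^{(1)}$, and that the ``only if'' direction needs witness configurations actually realizable as dropout projections, addresses gaps the paper leaves entirely implicit.
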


We note that the minimizer of the weighted average of the divergences of the form~\eqref{eq:mmix} is called $m$-mixture, while that with the inverse direction of divergence function $ \sum w_k D \Big[p_{\bm{\theta}}\| p_{\bm{\theta}^*_k}\Big]$ is called the $e$-mixture when $D=D_{KL}$ and the $u$-mixture when $D$ is in a class of Bregman divergence. The explicit form of those mixture is considered in~\citep{muratafujimoto2009} and \citep{HE_IG_2022}.

\begin{figure}[t]
    \centering
    \includegraphics[width=0.95\linewidth]{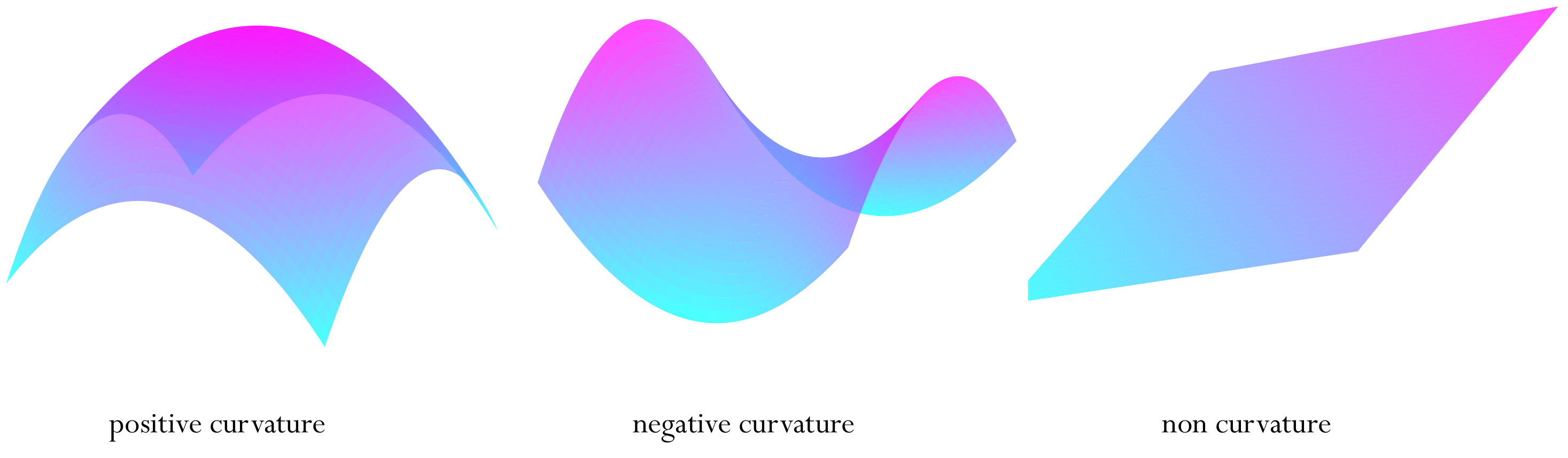}
    \caption{Curvature characterizes the deviation of local geometric properties of a manifold from the properties of the Euclidean geometry.}
    \label{fig:curvature}
\end{figure}

\subsection{When dropout outperforms ERM}
Using the Levi-Civita connections, the curvature tensor can be expressed in terms of the
metric tensor $g$ and its first and second partial derivatives.
Additionally, one can introduce the scalar curvature $R(p)$:
\begin{align}
    R(p) = g^{ij}(p)R^k_{kij}(p) = g^{ij}(p)g^{kl}(p)R_{kijl}(p)
\end{align}
for some point $p\in\mathcal{M}$.
The curvature tensor characterizes the deviation of local geometric properties of a manifold $\mathcal{M}$ from the properties of the Euclidean geometry (see Figure~\ref{fig:curvature}), and by using the scalar curvature, we can interpret geometric properties directly via the sign of $R(p)$ and volume elements.
That is, one can see intuitively that by writing the volume element ratio in terms of scalar curvature, the shape of the manifold changes when the sign of $R(p)$ is reversed.
For example, the volume of a small sphere about some point $p$ has smaller (larger) volume (area) than a sphere of the same radius defined on the $n$-dimensional real space $\mathbb{R}^n$ when the scalar curvature $R(p)$ is positive (negative) at that point.
Quantitatively, this behavior is described as follows:
\begin{align}
    \mathbb{V}(p|r) = \frac{\text{Vol}\left[\mathbb{S}^{(n-1)}(p|r)\subset\mathcal{M}\right]}{\text{Vol}\left[\mathbb{S}^{(n-1)}(p|r)\subset\mathbb{R}^n\right]} &= 1 - \frac{R(p)}{6(n + 2)}r^2 + O(r^4), \label{eq:volume_element}
\end{align}
where the notation $\mathbb{S}^{(m)}(p|r)$ represents a $m$-dimensional sphere with small radius $r$ centered at the point $p$.
For more details, see the textbooks of Riemannian geometry~\citep{do1992riemannian,petersen2006riemannian,jost2008riemannian}.

Geometrically, the conditions under which dropout training is superior to ordinary ERM are shown.
Let $\mathbb{V}(p|r)$ be the volume ratio of the original model manifold and $\mathbb{V}^*(p|r)$ be the flat manifold with the same point $p$.
Then, the difference is 
\begin{align}
    \mathbb{V}(p|r) - \mathbb{V}^*(p|r) = -\frac{R(p|r)}{6(n+2)}r^2,
\end{align}
and if the curvature is negative, $\mathbb{V}(p|r) - \mathbb{V}^*(p|r) > 0$.
Then, by mixing the parameters obtained from the dropout learning, we expect the model manifold is enlarged and it is able to obtain an estimator that is closer to the empirical distribution $q$.
Figure~\ref{fig:dropout_curvature} shows that when the curvature of model manifold $\mathcal{M}$ is negative, the dropout estimator could outperforms the ERM because it can be get closer to the empirical distribution, and when the curvature is positive, the dropout estimator is inferior to the ERM.

It is obvious that, for $1\leq k\leq n$, there exist nested submanifolds such that $\Theta(1)\subset\cdots\subset\Theta(k)\subset\cdots\subset\Theta$, where $\Theta(k)\subset\mathbb{R}^k$.
Similar to the discussion in Section~\ref{sec:flatness}, if $\mathcal{M}^D_k = \{p(y|\bm{x};\bm{\theta}_k)\ |\ \bm{\theta}_k\in\Theta(k)\subset\Theta\}$ is not flat, then the parameters generated by a dropout based on a smaller submanifold may not be included in $\mathcal{M}^D_k$.
That is, dropout estimator is included in $\mathcal{M}^D_k$ if and only if $\tilde{\nabla}$ is a  flat connection.
When $\mathcal{M}$ is not flat, it is better to construct the dropout estimator from as many $\Theta(k)$ as possible, which may allow us to obtain parameters that are not reachable in the original model manifold.

\begin{figure}[t]
    \centering
    \includegraphics[width=0.95\linewidth]{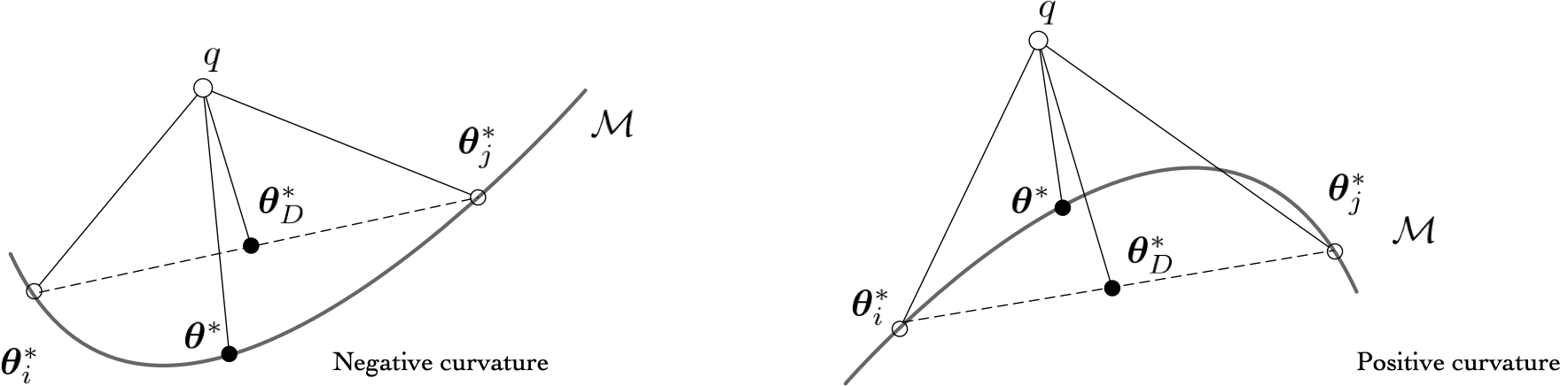}
    \caption{Relationship between the curvature of the model manifold $\mathcal{M}$ and the dropout estimator.
    When the curvature of $\mathcal{M}$ is negative, the dropout estimator outperforms the ERM, and when the curvature is positive, the dropout estimator is inferior to the ERM.}
    \label{fig:dropout_curvature}
\end{figure}

\section{Dropout submanifolds and Regularization}
\subsection{Geometry of dropout submanifolds}
Let $g$ be a metric of the model manifold $(\mathcal{M}, g, \nabla, \nabla^*)$.
Since $T_p\mathcal{M}^D_k\subset T_p\mathcal{M}$, the metric $\tilde{g}^k$ of $\mathcal{M}^D_k$ is naturally induced as
\begin{equation}
    \tilde{g}^k_p(\bm{v}, \bm{w}) \coloneqq g_p(\bm{v}, \bm{w}),\quad (\bm{v}, \bm{w}\in\mathcal{T}_p\mathcal{M}^D_k).
\end{equation}

In addition, each tangent space of $\mathcal{M}$ has the orthogonal decomposition
\begin{equation}
    T_p\mathcal{M} = T_p\mathcal{M}^D_k\oplus N_{p,k},
\end{equation}
where $N_{p,k} = \{Y \in T_p \mathcal{M}: g(Y,X) = 0,\ \forall X\in T_p\mathcal{M}^D_k\}$.
Then, any vector $Z\in T_p\mathcal{M}$ can be written as $Z=Z^T + Z^N$, with $Z^T\in T_p\mathcal{M}^D_k$ and $Z^N\in N_{p,k}$.
Applying the previous decomposition, for any $X,Y\in\mathcal{X}(\mathcal{M}^D_k)$ we have
\begin{align}
    \nabla_XY &= \tilde{\nabla}_XY + \bot(X,Y), \\
    \nabla^*_XY &= \tilde{\nabla}^*_XY + \bot^*(X,Y),
\end{align}
where
\begin{alignat}{2}
     \tilde{\nabla}_XY =& (\nabla_XY)^T, & \quad  \tilde{\nabla}^*_XY =& (\nabla^*_XY)^T \\
     \bot(X,Y) =& (\nabla_XY)^N, & \quad \bot^*(X,Y) =& (\nabla^*_XY)^N.
\end{alignat}
\begin{theorem}
The maps $\tilde{\nabla},\tilde{\nabla}^*:\mathcal{X}(\mathcal{M}^D_k)\times\mathcal{X}(\mathcal{M}^D_k)\to\mathcal{X}(\mathcal{M}^D_k)$ are torsion-free dual connections on $(\mathcal{M}^D_k, \tilde{g})$.
\end{theorem}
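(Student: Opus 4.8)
The plan is to run the classical Gauss-formula argument for connections induced on a submanifold, carried out simultaneously for $\nabla$ and $\nabla^*$ so that the dualistic structure is preserved. Concretely I would check three things in turn: that $\tilde\nabla$ and $\tilde\nabla^*$ satisfy the axioms of a linear connection on $\mathcal{M}^D_k$; that each is torsion-free; and that the pair is dual with respect to the induced metric $\tilde g$. The only inputs are that $(\mathcal{M},g,\nabla,\nabla^*)$ carries a dualistic structure with $\nabla,\nabla^*$ torsion-free, the orthogonal decomposition $T_p\mathcal{M}=T_p\mathcal{M}^D_k\oplus N_{p,k}$, and the fact that the tangential projection $(\cdot)^T$ is linear at each point, hence $\mathcal{F}(\mathcal{M}^D_k)$-linear on vector fields.

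For the connection axioms, $\mathcal{F}(\mathcal{M}^D_k)$-linearity of $\tilde\nabla_XY=(\nabla_XY)^T$ in $X$ and $\mathbb{R}$-linearity in $Y$ follow by applying $(\cdot)^T$ to the corresponding identities for $\nabla$. For the Leibniz rule I would apply $(\cdot)^T$ to $\nabla_X(fY)=(Xf)Y+f\nabla_XY$ and note that $(Xf)Y$ is already tangent to $\mathcal{M}^D_k$ because $Y\in\mathcal{X}(\mathcal{M}^D_k)$, giving $\tilde\nabla_X(fY)=(Xf)Y+f\tilde\nabla_XY$; the same holds verbatim for $\tilde\nabla^*$. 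For torsion-freeness the crucial elementary fact is that the Lie bracket of two vector fields tangent to $\mathcal{M}^D_k$ is again tangent, i.e. $[X,Y]=[X,Y]^T$; then
\begin{align*}
T_{\tilde\nabla}(X,Y) &= (\nabla_XY)^T-(\nabla_YX)^T-[X,Y]^T \\
&= \bigl(\nabla_XY-\nabla_YX-[X,Y]\bigr)^T = 0
\end{align*}
since $\nabla$ is torsion-free, and likewise for $\tilde\nabla^*$.

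For duality, take $X,Y,Z\in\mathcal{X}(\mathcal{M}^D_k)$. Since $\tilde g$ is the restriction of $g$ and $Z$ is tangent to $\mathcal{M}^D_k$, one has $Z\tilde g(X,Y)=Zg(X,Y)$. Applying the ambient duality identity and then writing $\nabla_ZX=\tilde\nabla_ZX+\bot(Z,X)$ with $\bot(Z,X)\in N_{p,k}$, orthogonality of $N_{p,k}$ to $T_p\mathcal{M}^D_k$ kills the normal term paired against $Y$, so $g(\nabla_ZX,Y)=\tilde g(\tilde\nabla_ZX,Y)$; similarly $g(X,\nabla^*_ZY)=\tilde g(X,\tilde\nabla^*_ZY)$. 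Substituting yields
\begin{align*}
Z\tilde g(X,Y)=\tilde g(\tilde\nabla_ZX,Y)+\tilde g(X,\tilde\nabla^*_ZY),
\end{align*}
which is precisely the statement that $\tilde\nabla$ and $\tilde\nabla^*$ are dual on $(\mathcal{M}^D_k,\tilde g)$.

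The main obstacle — essentially the only step that is not pure bookkeeping — is justifying the two \emph{stability-under-restriction} facts: that brackets of tangent vector fields stay tangent (used for torsion-freeness) and that $Z\tilde g(X,Y)=Zg(X,Y)$ on the submanifold (used for duality). Both become transparent in the adapted chart in which $\mathcal{M}^D_k$ is the coordinate slice $\{\theta^j=0: j\in I_k\}$: there the tangent vector fields are exactly those whose components along $I_k$ vanish, a subalgebra under the Lie bracket, and differentiation along such a field of a function depends only on its restriction to the slice. I would include this chart-level verification; once it is in place, the rest reduces to the orthogonal decomposition $T_p\mathcal{M}=T_p\mathcal{M}^D_k\oplus N_{p,k}$ and the linearity of $(\cdot)^T$.
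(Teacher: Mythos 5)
Your proposal is correct and follows essentially the same route as the paper's proof: apply the tangential projection to the ambient connection axioms and Leibniz rule, use tangency of the Lie bracket of fields on $\mathcal{M}^D_k$ for torsion-freeness, and use the orthogonal decomposition $T_p\mathcal{M}=T_p\mathcal{M}^D_k\oplus N_{p,k}$ to discard the normal components in the duality identity. Your version is in fact slightly more careful than the paper's (which has minor typos in the Leibniz computation and leaves the orthogonality step implicit), but the argument is the same.
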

\begin{proof}
It suffices to check the $\mathcal{F}(\mathcal{M}^D_k)$-linearlity in the first argument and the Leibnitz' rule in the second as follows:
\begin{align*}
    \tilde{\nabla}_{fX}Y &= (\nabla_{fX}Y)^T = (f\nabla_XY)^T = f(\nabla_XY)^T = f\tilde{\nabla}_XY, \quad (\forall X,Y\in\mathcal{X}(\mathcal{M}^D_k)) \\
    \tilde{\nabla}_X(FY) &= (\nabla_XfY)^T = (X(f) + f\nabla_XY)^T = X(f) + f\tilde{\nabla}_XY, \quad (\forall X,Y\in\mathcal{X}(\mathcal{M}^D_k)).
\end{align*}
It implies that $\tilde{\nabla}$ is a linear connection on $\mathcal{M}^D_k$.
Since $\nabla$ is torsion-free and $[X,Y]$ is tangent to $\mathcal{M}^D_k$,
\begin{equation*}
    \tilde{\nabla}_XY - \tilde{\nabla}_YX = (\nabla_XY - \nabla_YX)^T = [X,Y]^T = [X,Y],
\end{equation*}
and $\tilde{\nabla}$ is torsion-free.
For any $X,Y,Z\in\mathcal{X}(\mathcal{M})$,
\begin{align*}
    Z\tilde{g}(X, Y) &= g(\nabla_ZX, Y) + g(X, \nabla^*_ZY) \\
    &= g((\nabla_ZX)^T, Y) + g(X, (\nabla^*_ZY)^T) \\
    &= g(\tilde{\nabla}_ZX, Y) + g(X, \tilde{\nabla}^*_ZY),
\end{align*}
which shows that $\tilde{\nabla}$ and $\tilde{\nabla}^*$ are dual connections on $(\mathcal{M}^D_k, \tilde{g})$.
\end{proof}

\if0
It is obvious that, for $1\leq k\leq n$, there exist nested submanifolds such that $\Theta(1)\subset\cdots\subset\Theta(k)\subset\cdots\subset\Theta$, where $\Theta(k)\subset\mathbb{R}^k$.
Similar to the discussion in Section~\ref{sec:flatness}, if $\mathcal{M}^D_k = \{p(y|\bm{x};\bm{\theta}_k)\ |\ \bm{\theta}_k\in\Theta(k)\subset\Theta\}$ is not flat, then the parameters generated by a dropout based on a smaller submanifold may not be included in $\mathcal{M}^D_k$.
That is, dropout estimator is included in $\mathcal{M}^D_k$ if and only if $\tilde{\nabla}$ is a  flat connection.
When $\mathcal{M}$ is not flat, it is better to construct the dropout estimator from as many $\Theta(k)$ as possible, which may allow us to obtain parameters that are not reachable in the original model manifold.
\fi 

\subsection{Regularization term equivalent to dropout}
\label{sec:fim_regularization}
In the previous section, we see that the dropout can be regarded as flattening the model manifold.
In this section, we consider a regularization term that is equivalent to dropout.
The flatness of a manifold $\mathcal{M}$ can be expressed in the following second fundamental form for tangent vectors $U,V$ belonging to its tangent space $T_p\mathcal{M}$:
\begin{align}
    L(U,V) &= (\nabla_U V)^\perp, \label{eq:second_fundamental_form}
\end{align}
where
\begin{align}
    (\nabla_U V)_p &= (\nabla_U V)_p^\parallel + (\nabla_U V)_p^\perp
\end{align}
is called as Gauss formula.
$L(U, V)$ is symmetric and linear and can be written as
\begin{align}
    L(U,V) = L(\bm{\xi}_\alpha, \bm{\xi}_\beta)U^\alpha V^\beta = L_{\alpha,\beta} U^\alpha V^\beta.
\end{align}
Obviously, $L=0$ when the coefficients $L_{\alpha,\beta}$ vanish.
Here, we define the norm as follows
\begin{align}
    \|L\| = \max\Biggl\{\frac{\|L(U, U)\|}{\|U\|}\ |\ U \in T_p\mathcal{M} \Biggr\}.
\end{align}
From the symmetry of $L$, the eigenvalues of suitably normalized $L$ are
\begin{align}
    \max_{\|U\| = 1}|L^k(U,U)| = |\lambda_k|,
\end{align}
and
\begin{align}
    \|L\| = (\lambda_1^2,\dots,\lambda_m^2)^{\frac{1}{2}}
\end{align}
holds.
Thus, the regularization using the second fundamental form can be written using its eigenvalues as
\begin{align}
    \mathcal{L}(\bm{\theta}; \mu) = \ell(y, \varphi(\bm{x};\bm{\theta})) + \mu\|L\|, \label{eq:fisher_regularization}
\end{align}
where $\varphi$ is the neural network parameterized by $\bm{\theta}$.
Here, the Levi-Civita connection in tangent space is $\nabla_{\bm{\xi}_i}\bm{\xi}_j = \frac{\partial^2h}{\partial\bm{\theta}_i\partial\bm{\theta}_j}$, so an orthogonal decomposition of this gives
\begin{align}
    \frac{\partial^2h}{\partial\bm{\theta}_i\partial\bm{\theta}_j} = \Biggl(\frac{\partial^2\varphi}{\partial\bm{\theta}_i\partial\bm{\theta}_j}\Biggr)^\parallel + \Biggl(\frac{\partial^2\varphi}{\partial\bm{\theta}_i\partial\bm{\theta}_j}\Biggr)^\perp. \label{eq:decomposition_levi_civita}
\end{align}
From Eqs.~\eqref{eq:second_fundamental_form} and \eqref{eq:decomposition_levi_civita}, we have
\begin{align}
    L_{ij} = \Biggl(\frac{\partial^2\varphi}{\partial\bm{\theta}_i\partial\bm{\theta}_j}\Biggr)^\perp = \frac{\partial^2\varphi}{\partial\bm{\theta}_i\partial\bm{\theta}_j} - \Biggl(\frac{\partial^2\varphi}{\partial\bm{\theta}_i\partial\bm{\theta}_j}\Biggr)^\parallel. \label{eq:decomposition_second_fundamental_form}
\end{align}
The first term of Eq.~\eqref{eq:decomposition_second_fundamental_form} is the Fisher information matrix of the neural network.

\subsection{Connection to other regularizers}
Using Eq~\eqref{eq:fisher_regularization}, we can relate dropout to other regularizers.
First, we re-formulate Eq~\eqref{eq:fisher_regularization} with some function $\Phi$ that depends on the Fisher information matrix $I(\bm{\theta})$ as follows.
\begin{align}
    \mathcal{L}(\bm{\theta}; \mu) = \ell(y, \varphi(\bm{x};\bm{\theta})) + \Phi(I(\bm{\theta})). \label{eq:generalized_fisher_regularization}
\end{align}
Using the fact that the Fisher information matrix can be written in terms of KL divergence with small changes in parameters, i.e.,
\begin{align}
    D_{KL}[\bm{\theta}\|\bm{\theta}+d\bm{\theta}] = \delta^\top_{\bm{\theta}}I(\bm{\theta})\delta_{\bm{\theta}},
\end{align}
the following remarks can be derived.

\begin{remark}
Let $\Phi(I(\bm{\theta})) = D_{KL}[\bm{\theta}\|\bm{\theta}+d\bm{\theta}]$.
In this case, Eq.~\eqref{eq:generalized_fisher_regularization} is equivalent to knowledge distillation~\citep{hinton2015distilling,gou2021knowledge} with teacher model parametrized by $\bm{\theta}+d\bm{\theta}$.
\end{remark}

\begin{remark}
Let
\begin{align}
    \Phi(I(\bm{\theta})) = \frac{\lambda^2 C}{8n} + \log\frac{D_{KL}[\bm{\theta}\|\bm{\theta}+d\bm{\theta}] + \frac{1}{\epsilon}}{\lambda},
\end{align}
where $\lambda$ is some parameter, $C$ is a constant, $n$ is the sample size and $\epsilon$ is the precision parameter.
In this case, Eq.~\eqref{eq:generalized_fisher_regularization} is equivalent to PAC-Bayesian regularization~\citep{catoni2003pac}.
\end{remark}

\begin{figure}
    \centering
    \includegraphics[width=0.9\linewidth]{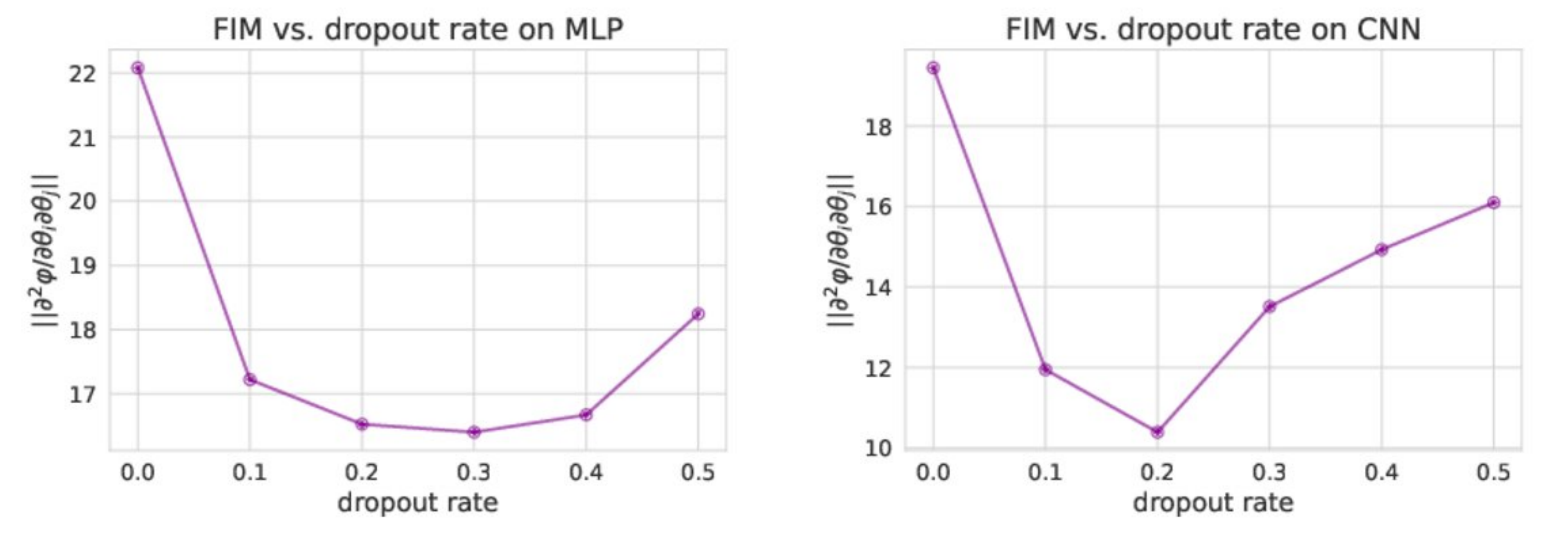}
    \caption{Norm of Fisher information matrix vs. dropout rate. We can see that applyig dropout reduces the norm of Fisher information matrix.}
    \label{fig:dropout_rate}
\end{figure}

\section{Numerical experiments}
In Section~\ref{sec:fim_regularization} we discussed the connection between dropout training and Fiser information matrix.
In this section, we confirm this relationship by the numerical experiments.

We use MNIST handwritten dataset~\citep{deng2012mnist} which has a training set of $60,000$ examples, and a test set of $10,000$ examples.
Each instance is $28\times 28$ gray-scale image.

Numerous studies have proposed approximation methods for the Fisher information matrix~\citep{heskes2000natural,martens2015optimizing,grosse2016kronecker,roux2007topmoumoute,ollivier2015riemannian}.
In our experiments, we use K-FAC~\cite{martens2015optimizing} method as the approximation of the Fisher information matrix.
This is one of the most well-known Fisher information matrix approximations and is sufficient to achieve the objective of numerically confirming the relationship between Fisher information matrix and dropout.
All experimental results are reported as the mean of $10$ trials.

Figure~\ref{fig:dropout_rate} shows the relationship between Fisher information matrix and dropout rate.
From this figure, we can see that applying dropout reduces the norm of Fisher information matrix.
In the previous study~\citep{baldi2013understanding}, it is suggested that the optimal parameter for dropout is $p=0.5$, but the numerical experiments suggest that $p=0.2$ or $p=0.3$ lead the minimum norm of FIM.
This results suggest that the second term of Eq.~\eqref{eq:decomposition_second_fundamental_form} is dominant with high dropout rate.

\section{Conclusion and discussion}
This study formulates dropout through the lens of information geometry.
Dropout is nothing but a perturbation of neurons in neural networks, and a deep understanding of this framework is expected to help us understand the behavior of neural networks.

\subsection{Future works}
\begin{itemize}
    \item Information geometry is a very powerful framework that has led to a number of new algorithms~\citep{amari1998natural,murata2004information,akaho2004pca,zhang2019cooperative}. It is very important future research to derive new dropout-inspired algorithms based on insights into the algorithms from a geometric perspective.
    \item The information geometry used in this paper is a mathematical tool based on the Fisher metric.
    Apart from this, one of the most popular geometric frameworks in the field of machine learning today is Wasserstein geometry~\citep{villani2008optimal}.
    More recently, a novel work unifying Fisher information geometry and Wasserstein geometry has been published~\citep{amari2018information}.
    In addition to the analysis of the algorithm in Fisher information geometry, it is expected that an analysis based on optimal transport will allow for a deeper understanding of the algorithm.
    \item we showed that dropout essentially corresponds to a regularization that depends on the Fisher information, and supported this result from numerical experiments. This result suggests that dropout and other regularization methods can be generalized via the Fisher information matrix.
\end{itemize}

 \newcommand{\noop}[1]{}

\end{document}